\documentclass[letterpaper]{article} 
\usepackage{aaai23}  
\usepackage{times}  
\usepackage{helvet}  
\usepackage{courier}  
\usepackage[hyphens]{url}  
\usepackage{graphicx} 
\urlstyle{rm} 
\usepackage{natbib}  
\usepackage{caption} 
\frenchspacing  
\setlength{\pdfpagewidth}{8.5in}  
\setlength{\pdfpageheight}{11in}  
%
\usepackage{algorithm}
\usepackage{algorithmic}

\usepackage{subcaption}
\usepackage{graphicx}
\usepackage{microtype}

\usepackage{amsmath}
\usepackage{amssymb}
\usepackage{mathtools}
\usepackage{amsthm}

%
\usepackage{newfloat}
\usepackage{listings}
\DeclareCaptionStyle{ruled}{labelfont=normalfont,labelsep=colon,strut=off} 
\lstset{%
	basicstyle={\footnotesize\ttfamily},
	numbers=left,numberstyle=\footnotesize,xleftmargin=2em,
	aboveskip=0pt,belowskip=0pt,%
	showstringspaces=false,tabsize=2,breaklines=true}
\floatstyle{ruled}
\newfloat{listing}{tb}{lst}{}
\floatname{listing}{Listing}
%
\pdfinfo{
/TemplateVersion (2023.1)
}

\setcounter{secnumdepth}{1} 

%


\title{Variable-Based Calibration for Machine Learning Classifiers}
\author{
    Markelle Kelly, Padhraic Smyth
}
\affiliations{
    University of California, Irvine

    kmarke@uci.edu, smyth@ics.uci.edu
%
}

\theoremstyle{plain}
\newtheorem{theorem}{Theorem}[section]

\theoremstyle{definition}
\newtheorem{definition}[theorem]{Definition}

\theoremstyle{remark}


\begin{document}

\maketitle

\begin{abstract}
The deployment of machine learning classifiers in high-stakes domains requires well-calibrated confidence scores for model predictions. In this paper we introduce the notion of variable-based calibration to characterize calibration properties of a model with respect to a variable of interest, generalizing traditional score-based metrics such as expected calibration error (ECE). In particular, we find that models with near-perfect ECE can exhibit significant miscalibration as a function of features of the data. We demonstrate this phenomenon both theoretically and in practice on multiple well-known datasets, and show that it can persist after the application of existing calibration methods. To mitigate this issue, we propose strategies for detection, visualization, and quantification of variable-based calibration error. We then examine the limitations of current score-based calibration methods and explore potential modifications. Finally, we discuss the implications of these findings, emphasizing that an understanding of calibration beyond simple aggregate measures is crucial for endeavors such as fairness and model interpretability. 
\end{abstract}

\section{Introduction}

Predictive models built by machine learning algorithms are increasingly informing decisions across high-stakes applications such as medicine \citep{rajkomar2019machine}, employment \citep{chalfin2016productivity}, and criminal justice \citep{zavrsnik2021}. There is also broad recent interest in developing systems where humans and machine learning models collaborate to make predictions and decisions \citep{kleinberg2018human,bansal2021most,de2021classification,steyvers2022bayesian}. A critical aspect of using model predictions in such contexts is calibration. In particular, in order to trust the predictions from a machine learning classifier, these predictions must be accompanied by well-calibrated confidence scores. 

In practice, however, it has been well-documented that machine learning classifiers such as deep neural networks can produce poorly-calibrated class probabilities \citep{guo2017calibration,vaicenavicius2019evaluating,ovadia2019}. As a result, a variety of calibration methods have been developed, which aim to ensure that a model's confidence (or score) matches its true accuracy. A widely used approach is post-hoc calibration: methods which use a separate labeled dataset to learn a mapping from the original model's class probabilities to calibrated probabilities, often with a relatively simple one-dimensional mapping (e.g., \citet{plattprobabilistic,kull2017,kumar2019verified}). These methods have been shown to generally improve the the empirical calibration error of a model, as commonly measured by the expected calibration error (ECE).

However, as we show in this paper, aggregate measures of score-based calibration error such as ECE can hide significant systematic miscalibration in other dimensions of a model's performance. To address this issue we introduce the notion of {\it variable-based calibration} to better understand how the calibration error of a model can vary as a function of a variable of interest, such as an input variable to the model or some other metadata variable. We focus in particular in this paper on real-valued variables. For example, in prediction problems involving individuals (e.g., credit-scoring or medical diagnosis) one such variable could be {\it Age}. Detecting systematic miscalibration is important for problems such as assessing the fairness of a model,
for instance detecting that a model is significantly overconfident for some age ranges and underconfident for others. 

As an illustrative example, consider a simple classifier trained to predict the presence of cardiovascular disease\footnote{https://www.kaggle.com/sulianova/cardiovascular-disease-dataset}.
After the application of Platt scaling, a standard post-hoc calibration method, this model attains a relatively low ECE of 0.74\%. This low ECE is reflected in the reliability diagram shown in Figure \ref{fig:introa}, which shows near-perfect alignment with the diagonal. If a user of this model were to only consider aggregate metrics such as ECE, they might reasonably conclude that the model is generally well-calibrated. However, evaluating model error and predicted error with respect to the variable {\it Patient Age} reveals an undesirable and systematic miscalibration pattern with respect to this variable, as illustrated in Figure \ref{fig:introb}: the model is underconfident by upwards of five percentage points for younger patients, and is significantly overconfident for older patients.

\begin{figure*}
    \centering
    \begin{subfigure}[b]{\columnwidth}
        \centering
        \includegraphics[width=\linewidth]{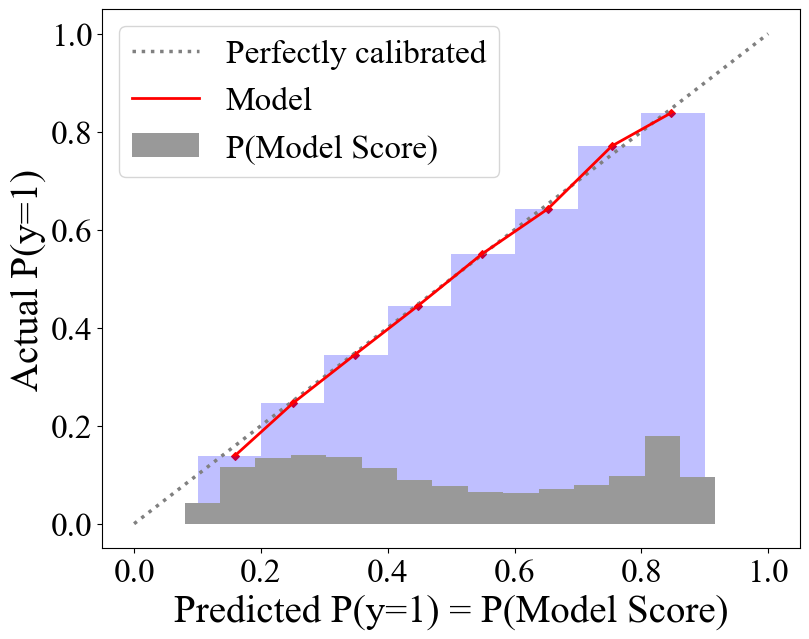} 
        \caption{Reliability diagram (for accuracy)} 
        \label{fig:introa}
    \end{subfigure}
    \begin{subfigure}[b]{\columnwidth}
        \centering
        \includegraphics[width=\linewidth]{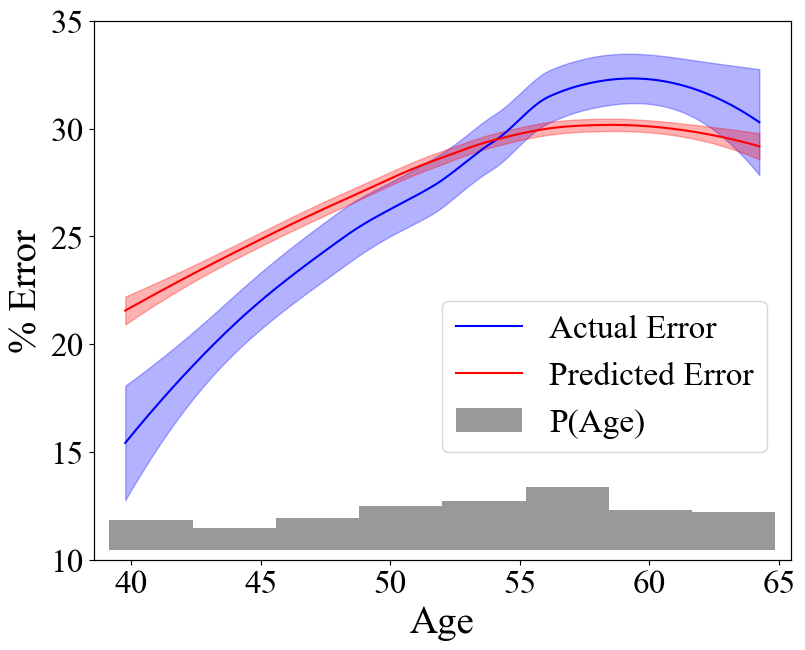} 
        \caption{Variable-based calibration plot (for error)} \label{fig:introb}
    \end{subfigure}
    \caption{Calibration plots for a neural network predicting cardiovascular disease, after calibration with Platt scaling: (a) reliability diagram, (b) LOESS-smoothed estimates with confidence intervals of actual and model-predicted error as a function of patient age. This dataset consists of 70,000 records of patient data (49,000 train, 6,000 validation, 15,000 test), with a binary prediction task of determining the presence of cardiovascular disease.}
    \label{fig:intro}
\end{figure*}

In this paper, we systematically investigate variable-based calibration for classification models, from both theoretical and empirical perspectives. 
In particular, our contributions are as follows: 
\begin{enumerate}
    \item We introduce the notion of {\it variable-based calibration} and define a per-variable calibration metric (VECE).
    \item We characterize theoretically the relationship between variable-based miscalibration measured via VECE and traditional score-based miscalibration measured via ECE.  
    \item We demonstrate, across multiple well-known tabular, text, and image datasets and a variety of models, that significant variable-based miscalibration can exist in practice, even after the application of standard score-based calibration methods.
    \item We investigate \textit{variable-based calibration methods} and demonstrate empirically that 
    these methods can simultaneously reduce both ECE and VECE.
    \footnote{\raggedright{Our code is available online at https://github.com/markellekelly/variable-wise-calibration.}}
\end{enumerate}

\section{Related Work}
\label{section:related_work}

\paragraph{Visualizing Model Performance by Variable:} 
In prior work a number of different techniques have been developed for visual understanding and diagnosis of model performance with respect to a particular variable of interest. One such technique is partial dependence plots \citep{friedman2001greedy, molnar2020interpretable}, which visualize the effect of an input feature of interest on model predictions. Another approach is dashboards such as FairVis \citep{cabrera2019fairvis} which enable the exploration of model performance (e.g., accuracy, false positive rate) across various data subgroups. However, 
none of this prior work investigates the visualization of per-variable calibration properties of a model,
i.e., how a model's own predictions of accuracy (or error) vary as a function of a particular variable.

\paragraph{Quantifying Model Calibration by Variable:} 
Work on calibration for machine learning classifiers has largely focused on score-based calibration: reliability diagrams, the ECE, and standard calibration methods are all defined with respect to confidence scores \citep{murphy1977reliability, huang2020tutorial, song2021classifier}. An exception to this is in the fairness literature, where researchers have broadly called for disaggregated model evaluation, e.g. computing metrics of interest individually for sensitive sub-populations \citep{mitchell2019model, raji2020closing}. To this end, several notions of calibration that move beyond standard aggregate measures have been introduced: \citet{johnsonmasses} check calibration across all identifiable subpopulations of the data, \citet{pan2020field} evaluate calibration over data subsets corresponding to a categorical variable of interest, and \citet{luolocalized} compute ``local calibration'' using the average classification error on similar samples. 
Our paper expands on this prior work in two ways. First, we shift the focus from categorical to real-valued variables---our methods operate on a continuous basis, estimating calibration for an entire population rather than for various subgroups. Second, we center on diagnosing calibration; we present visualization and estimation techniques for understanding an existing classifier rather than prescriptive conditions for model training or selection.

\section{Background on Score-Based ECE}
\label{section:background} 
Consider a classification problem mapping inputs $x$ to predictions for labels  $y \in \{1, \ldots, K\}$. Let $f$ be a black-box classifier which outputs label probabilities $f(x) \in [0,1]^K$ for each $x \in X$. Then, for the standard 0-1 loss function, the predicted label is $\hat{y} = \text{argmax}(f(x)) \in \{1, \ldots, K\}$ and the corresponding confidence score is $s = s(x) = P_f(y=\hat{y} | x) = \text{max}(f(x))$. It is of interest to determine whether such a model is \textit{well-calibrated}, that is, whether its confidence matches the true probability that a prediction is correct. 

For a given confidence score $s$, we define $\text{Acc}(s) = P(y=\hat{y}|s) = \mathbb{E} \left[ \mathbb{I}[y=\hat{y} |s] \right]$.
Then the $\ell_p$ calibration error (CE), as a function of the confidence score $s$, is defined as the difference between accuracy and confidence score  \citep{kumar2019verified}:
\begin{equation}
\label{eqn:ce}
    \text{CE}(s) =  | P(y = \hat{y}|s) - s |^p = | \text{Acc}(s) - s |^p
\end{equation}
where $p\geq 1$. In this paper, we will focus on the expectation of the $\ell_1$ calibration error with $p=1$, known as the ECE:
\begin{equation}
\label{eqn:ece}
    \text{ECE}  = \mathbb{E}[\text{CE}(s)]  = \int_s P(s) |\text{Acc}(s) - s| ds  
\end{equation} 
where an ECE of zero corresponds to perfect calibration. In practice, ECE is often estimated empirically on a labeled test dataset by creating $B$ bins over $s$ according to some binning scheme (e.g., \citet{guo2017calibration}):
\begin{equation}
\label{eq:ece_est}
   \widehat{\text{ECE}} = \sum_{b=1}^{B} \frac{n_b}{n} |\text{Acc}_b - \text{Conf}_b|
\end{equation}
where $n_b$ is the number of datapoints in bin $b$, $n$ is the total number of datapoints, and $\text{Acc}_b$ and $\text{Conf}_b$ are the estimated accuracy and estimated average value of confidence, respectively, in bin $b=1,\ldots,B$.

\section{Variable-Based Calibration Error}
\label{section:VCE} 
 In many applications, we may be motivated to understand the calibration properties of a classification model $f$ relative to one or more particular variables of interest.
 For instance, traditional reliability diagrams and the ECE measure may be insufficient to fully characterize the type of variable-based miscalibration shown in Figure \ref{fig:intro}.
 
Consider a real-valued variable $V$ taking values $v$. $V$ could be a variable related to the inputs $X$ of the model, such as one of the input features, another feature (e.g., metadata) defined per instance but not used in the model, or some function of inputs $x$. To evaluate model calibration with respect to $V$, we introduce the notion of \textit{variable-based calibration error} (VCE), defined pointwise as a function of $v$:
\begin{equation}
\label{eqn:vce}
    \mbox{VCE}(v) = \bigl| \text{Acc}(v) - \mathbb{E}[s|v] \bigr|
\end{equation}
where $ \text{Acc}(v) = P(y=\hat{y}|v) $ is the accuracy of the model conditioned on $V=v$, marginalizing over inputs to the model that do not involve $V$.
$\mathbb{E}[s|v]$ is the expected model score conditioned on a particular value $v$:
\begin{equation}
   \mathbb{E}[s|v] = \int_s  s\cdot P(s|v) ds
\end{equation}
In general, conditioning on $v$ will induce a distribution over inputs $x$, which in turn induces a distribution $P(s|v)$ over scores $s$ and predictions $\hat{y}$. As an example of $\text{VCE}(v)$, in the context of Figure \ref{fig:introb}, at $v = 45$, the model accuracy $P(y=\hat{y}|v)$ is estimated to be $100-21=79\%$ and the expected score $\mathbb{E}[s|v]$ is estimated to be $76\%$, so the $\mbox{VCE}(v)$ is approximately $3\%$.

The expected value of $\text{VCE}(v)$,  with respect to $V$, is defined as:
\begin{equation}
\label{eqn:vece} 
  \text{VECE} = \mathbb{E}[\text{VCE}(v)] = \int_v P(v) \text{VCE}(v) dv
\end{equation} 
\paragraph{Comment} Note that CE (and ECE) can be seen as a special case of VCE (and VECE) given the correspondence of Equations \ref{eqn:ce} and \ref{eqn:ece} with Equations \ref{eqn:vce} and \ref{eqn:vece} when $V$ is the model score (i.e., $V=s$). In the rest of the paper, however, we view CE and ECE as being distinct from VCE and VECE in order to highlight the differences between score-based and variable-based calibration.

As with ECE, a practical way to compute an empirical estimate of VECE is by binning, where bins $b$ are  defined by some binning scheme (e.g., equal weight) over values $v$ of the variable $V$ (rather than over scores $s$):
\begin{equation}
    \widehat{\text{VECE}} = \sum_{b=1}^{B} \frac{n_{b}}{n} |\text{Acc}_{b} - \text{Conf}_{b}|.
\end{equation}
Here $b$ is a bin corresponding to some sub-range of $V$, $n_{b}$ is the number of points within this bin, and $\text{Acc}_{b}$ and $\text{Conf}_{b}$ are empirical estimates of the model's accuracy and the model's average confidence within bin $b$. For example, the $\widehat{\text{ECE}}$ in Figure 1 is 0.74\%, while the $\widehat{\text{VECE}}$ is 2.04\%.

The definitions of $\text{VCE}(v)$ and $\text{VECE}$ above are in terms of a continuous variable $V$, which is our primary focus in this paper. In general, the definitions above and the theoretical results in Section \ref{section:score_loss} also apply to discrete-valued $V$, as well as to multivariate $V$.

\section{Theoretical Results}
\label{section:score_loss}

In this section, we establish a number of results 
on the relationship between ECE and VECE. All proofs can be found in Appendix A. 

First, we show that the ECE and VECE can differ by a gap of up to 50\%. 
\begin{theorem}[VECE bound]
There exist $K$-ary classifiers $f$ and variables $V$ such that the classifier $f$ has both ECE = 0 and variable-based
$ \text{VECE} = 0.5 - \frac{1}{2K}$.
\label{thm:vece}
\end{theorem}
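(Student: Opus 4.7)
The plan is to prove the theorem by exhibiting an explicit construction with a binary variable $V$ uniform on $\{0,1\}$ together with a $K$-ary classifier $f$ whose score $s(x) = \max_k f(x)_k$ is a deterministic constant $s^*$ on every input. The idea is to choose $s^*$ and the joint label distribution so that (i) at score $s^*$ the overall accuracy equals $s^*$, forcing ECE $=0$, and (ii) the conditional accuracy given $V=0$ is substantially larger than $s^*$ while the conditional accuracy given $V=1$ is substantially smaller, producing a large VCE at each $V$-value.

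To realize VECE $= \frac{1}{2} - \frac{1}{2K}$ I would take the two extreme conditional accuracies $\mathrm{Acc}(V=0) = 1$ and $\mathrm{Acc}(V=1) = 1/K$; the ECE-cancellation condition at the single score then forces $s^* = \frac{1}{2}(1 + 1/K) = \frac{K+1}{2K}$. A concrete instantiation is $f(x) = \bigl(s^{*},\, (1-s^{*})/(K-1),\, \ldots,\, (1-s^{*})/(K-1)\bigr)$ for every $x$, so that $\hat{y} = 1$ and $s(x) = s^*$ deterministically, coupled with a data distribution satisfying $P(y = 1 \mid V = 0) = 1$ and $P(y = 1 \mid V = 1) = 1/K$ (e.g., the label is uniform on $\{1,\ldots,K\}$ when $V=1$). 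A small sanity check is that $s^* > 1/K$ whenever $K \geq 2$, so the argmax prediction $\hat{y}=1$ is unambiguous and $f$ is a legitimate $K$-ary classifier with no tie-breaking ambiguity.

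Verification is then essentially a one-line calculation on each side. Because only the single score $s^*$ is ever output, ECE reduces to $|\mathrm{Acc} - s^*|$, and the overall accuracy is $\tfrac{1}{2}\cdot 1 + \tfrac{1}{2}\cdot \tfrac{1}{K} = s^*$, so ECE $= 0$. For VECE, $\mathbb{E}[s \mid v] = s^*$ for both $v$, giving $\mathrm{VCE}(0) = 1 - s^* = \tfrac{K-1}{2K}$ and $\mathrm{VCE}(1) = s^* - 1/K = \tfrac{K-1}{2K}$; averaging with $P(V=0)=P(V=1)=1/2$ yields $\mathrm{VECE} = \tfrac{K-1}{2K} = \tfrac{1}{2} - \tfrac{1}{2K}$.

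There is no substantive obstacle; the only nontrivial step is guessing the score $s^*$, which is dictated by requiring the two extremal conditional accuracies $1$ and $1/K$ to average to the single score value. Conceptually, the proof simply exploits the mismatch between the two aggregations: ECE averages residuals over score bins whereas VECE averages over $V$-bins, so if a single score is produced by sub-populations with very different label distributions, the score-level aggregation can cancel exactly while a substantial per-$V$ calibration gap persists.
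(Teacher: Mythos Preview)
Your construction is correct and hits exactly the same extremal configuration the paper uses: a single score value $s^* = \gamma = \tfrac{1}{2} + \tfrac{1}{2K}$, with conditional accuracy equal to $1$ on one half of the $V$-space and $1/K$ on the other half, so that the score-level average cancels while the $V$-level gaps are maximal.

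The only noteworthy difference is in presentation. The paper introduces a superfluous width parameter $\alpha$, takes the score distribution to be $U(\gamma - \alpha, \gamma + \alpha)$ with a $2\times 2$ accuracy table indexed by $\{v \le v_t, v > v_t\} \times \{s \le \gamma, s > \gamma\}$, computes $\text{ECE} = \alpha/4$ and $\text{VECE} = \tfrac{1}{2} - \tfrac{1}{2K} - \tfrac{\alpha}{2}$, and then sends $\alpha \to 0$. Your argument is the $\alpha = 0$ degenerate case stated directly: a point-mass score at $s^*$ and a binary $V$ in place of a thresholded continuous one. This buys you exact equalities (ECE $= 0$ and VECE $= \tfrac{1}{2} - \tfrac{1}{2K}$) in one line rather than via a limit, and avoids the bookkeeping of the four-cell accuracy table. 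The paper's parameterized family, on the other hand, shows a bit more: the same phenomenon persists robustly for nondegenerate score distributions, with a quantifiable trade-off between ECE and VECE along the family. For the theorem as stated, your version is the cleaner proof.
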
 
For example, in the binary case with $K=2$,  the difference between ECE and VECE can be as large as 0.25. As the number of classes $K$ grows, this gap approaches 0.5. Thus, we can have models $f$ that are perfectly calibrated according to ECE (i.e. with ECE = 0) but that can have VECE ranging from 0.25 to 0.5. We will show later in Section \ref{section:diffeqs} that this type of gap is not just a theoretical artifact but also exists in real-world datasets, for real-world classifiers $f$ and for specific variables $V$ of interest.
The proof of Theorem \ref{thm:vece} is by construction, using a model $f$ that is very underconfident for certain regions of $v$ and very overconfident in other regions of $v$, but perfectly calibrated with respect to $s$. 

In the context of analyzing properties of ECE, \citet{kumar2019verified} proved that the binned empirical estimator $\widehat{\text{ECE}}$  consistently underestimates the true ECE, and showed by construction that this gap can approach 0.5. Our results complement this work in that we are concerned with the true theoretical relationship between two different measures of calibration, namely ECE and VECE, whereas \citet{kumar2019verified} relate the estimate $\widehat{\text{ECE}}$ (Equation \ref{eq:ece_est}) with the true ECE (Equation \ref{eqn:ece}).

\begin{theorem}[ECE bound]
There exist K-ary classifiers $f$ and variables $V$ such that the classifier $f$ has $\text{VECE}=0$ and $\text{ECE}=0.5 - \frac{1}{2K}$.
\end{theorem}
We prove this by construction, where $f$ is well-calibrated with respect to a variable $V$, but its low scores are very underconfident and its high scores are very overconfident.

The results above illustrate that the ECE and VECE measures can be very different for the same model $f$. In our experimental results we will also show that it is not uncommon (particularly for uncalibrated models) for ECE and VECE to be equal.
To understand the case of equality, we first define the notion of {\it consistent over- or under-confidence} with respect to a variable:
\begin{definition}[Consistent overconfidence]
\label{sec:overconf}
Let $f$ be a classifier with scores $s$. For a variable $V$ taking values $v$,  $f$ is {\it consistently overconfident}
if $\mathbb{E}[s|v] > P(y = \hat{y} |v), \forall v$, i.e., the expected value of the model's scores $f$ as a function of $v$ is always greater than the true accuracy as a function of $v$.
\end{definition}
Consistent underconfidence can be defined analogously, using $\mathbb{E}[s|v] < P(y = \hat{y} |v), \forall v$.
In the special case where the variable $V$ is defined as the score itself, we have the condition $ s > P(y = \hat{y} |s), \forall s$, leading to consistent overconfidence for the scores.

For the case of consistent over- or under- confidence for a model $f$, we have the following result:
\begin{theorem}[Equality conditions of ECE and VECE]
Let $f$ be a classifier that is consistently under- or over- confident with respect both to $s$ and to a variable $V$. Then the ECE and $\text{VECE}$ of $f$ are equal.
\end{theorem}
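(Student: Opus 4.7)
The plan is to exploit the consistent over/underconfidence hypothesis to strip the absolute values in both the ECE and VECE integrals, after which both quantities collapse to the same difference $\mathbb{E}[s] - P(y = \hat{y})$ (up to a sign) by the law of total expectation.

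First I would handle the consistently overconfident case. By Definition \ref{sec:overconf} applied to $V = s$, we have $s > P(y = \hat{y} \mid s)$ for all $s$, so the absolute value in Equation \ref{eqn:ece} may be dropped:
\begin{equation*}
\text{ECE} = \int_s P(s)\bigl(s - P(y = \hat{y} \mid s)\bigr)\, ds.
\end{equation*}
The first term evaluates to $\mathbb{E}[s]$ by definition, while the second term evaluates to $P(y = \hat{y})$ by the law of total probability, giving $\text{ECE} = \mathbb{E}[s] - P(y = \hat{y})$. Analogously, consistent overconfidence with respect to $V$ lets me drop the absolute value in Equation \ref{eqn:vece}:
\begin{equation*}
\text{VECE} = \int_v P(v)\bigl(\mathbb{E}[s \mid v] - P(y = \hat{y} \mid v)\bigr)\, dv.
\end{equation*}
By the tower property, $\int_v P(v)\mathbb{E}[s \mid v]\, dv = \mathbb{E}[s]$, and by the law of total probability, $\int_v P(v) P(y = \hat{y} \mid v)\, dv = P(y = \hat{y})$. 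Hence $\text{VECE} = \mathbb{E}[s] - P(y = \hat{y})$, matching ECE.

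The consistently underconfident case is symmetric: both absolute values are resolved with the opposite sign, so $\text{ECE} = P(y = \hat{y}) - \mathbb{E}[s] = \text{VECE}$. This covers both cases stated in the theorem.

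There is no real obstacle here; the argument is essentially two applications of the tower rule, and the only subtlety is making sure that ``consistent'' is understood in the strict pointwise sense of Definition \ref{sec:overconf} so that the absolute values can be removed without introducing a separate sign on each slice of $s$ or $v$. I would also note for clarity that the argument goes through equally well if the inequalities in Definition \ref{sec:overconf} are replaced by their non-strict versions, since in that case the integrands are nonnegative (resp.\ nonpositive) almost everywhere and the same identities hold.
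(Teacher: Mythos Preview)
Your proposal is correct and follows essentially the same argument as the paper: remove the absolute values using consistent over/underconfidence, then apply the law of total probability (tower property) to reduce both ECE and VECE to the same quantity $\mathbb{E}[s] - P(y=\hat{y})$ (or its negative). The only cosmetic difference is that you wrote out the overconfident case and the paper wrote out the underconfident one, each deferring the other to symmetry.
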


The results above provide insight into the relationship between ECE and VECE. Specifically, if the miscalibration is ``one-sided" (i.e., consistently over- or under-confident for both the score $s$ and a variable $V$) then ECE and VECE will be in agreement. However, when the classifier $f$ is both over- and under-confident (as a function of either $s$ or $v$), then ECE and VECE can differ significantly and, as a result, ECE can mask significant systematic miscalibration with respect to variables of interest.


\begin{figure*}[!ht]
    \centering
    \includegraphics[width=0.83\linewidth]{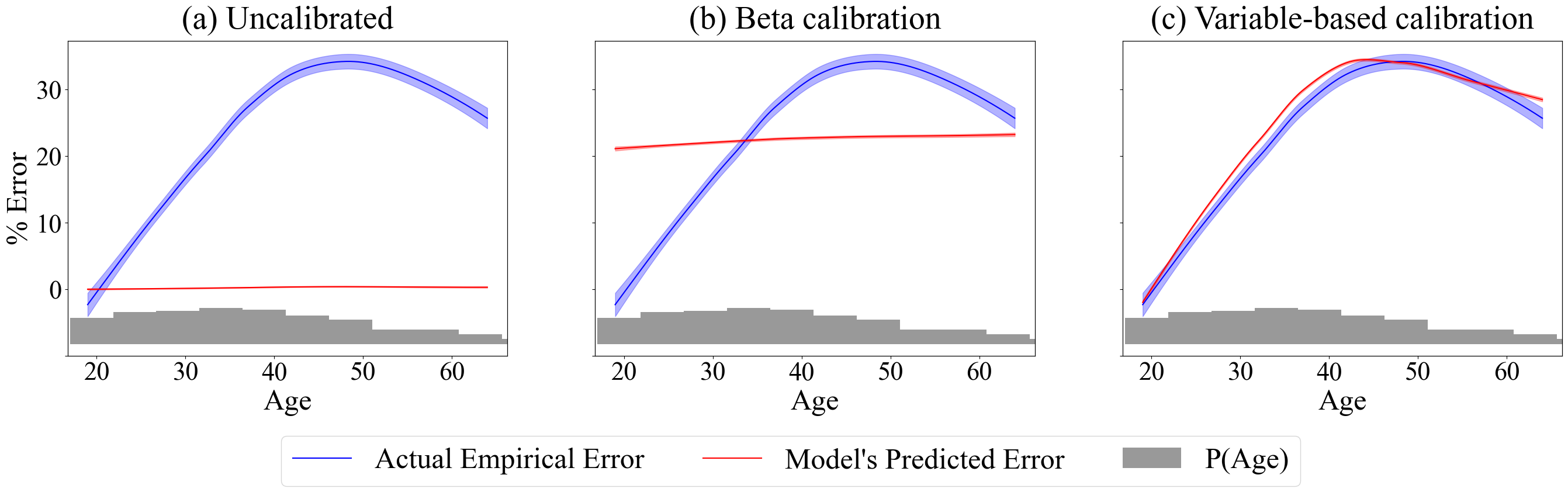} 
    \caption{Variable-based calibration plots for \textit{Age} for the Adult Income model} \label{fig:adult_co}
\end{figure*}

\section{Mitigating Variable-Based Miscalibration}
\label{section:experiments}
\subsection{Diagnosis of Variable-Based Miscalibration}
\label{section:detection}
In order to better detect and characterize per-variable miscalibration, we discuss below \textit{variable-based calibration plots}, which we have found useful in practice. Figure \ref{fig:introb} shows an example of a variable-based calibration plot for age. In Section \ref{section:diffeqs}, we explore how these plots can be used to characterize miscalibration across different classifiers, datasets, and variables of interest.

For ease of interpretation in the results below we focus on the model's error rate and predicted error, rather than accuracy and confidence, although they are equivalent. Particularly for models with high accuracy,  we find that it is more intuitive to discuss differences in error rate than in accuracy. 

To generate these types of plots, we first compute the individual error $\mathbb{I}[y \neq \hat{y}]$ and predicted error $1-s(x) = 1 - \text{max}(f(x))$ for each observation. We then construct nonparametric error curves with LOESS. (Further details are available in Appendix B.) This approach allows us to obtain 95\% confidence bars for the error rate and mean predicted error, based on standard error, thus putting the differences in curves into perspective.

Beyond visualization, we can use VECE scores to discover which variables for a dataset have the largest systematic variable-based miscalibration. In particular, ranking features in order of decreasing VECE highlights variables that may be worth investigating. An example of such a ranking for the Adult Income dataset\footnote{https://archive.ics.uci.edu/ml/datasets/adult}, based on a neural network with post-hoc beta calibration \citep{kull2017}, is shown in Table \ref{table:adultrank}. The {\it Years of education} and {\it Age} variables rank highest in VECE, so a model developer or a user of a model might find it useful to generate a variable-based calibration plot for each of these. The {\it Weekly work hours} and {\it Census weight} variables are of lesser concern, but could also be explored. We will perform an in-depth investigation of miscalibration with respect to the variable {\it Age} in Section \ref{section:adultexp}.

\begin{table}[!ht]
\centering
\begin{tabular}{lll}
           & \textbf{VECE} & \textbf{VCE($v^*$)}  \\
 \hline\hline
Years of education & 9.95\%    &  20.13\%\\
Age & 9.59\%  & 23.44\% \\
Weekly work hours & 7.94\% & 18.21\% \\
Census weight & 5.06\% & 12.08\% \\

\end{tabular}
\caption{Variable-based calibration error of Adult Income dataset features}
\label{table:adultrank}
\end{table}

It is also possible to define the maximum value of $\text{VCE}(v)$, i.e, the {\it worst-case calibration error}, as well as the value $v^*$ that incurs this worst-case error:
 \begin{align}
     v^* &= \arg\max_{v} \{ \text{VCE}(v) \} \\
     &= \arg\max_{v} \{ \bigl| P(y=\hat{y}|v) - \mathbb{E}[s(v)] \bigr| \} \nonumber
 \end{align}
 
Estimating either $v^*$ or $\text{VCE}(v^*)$ accurately may be difficult in practice, particularly for small sample sizes $n$, since it involves the non-parametric estimation of the difference of two curves \citep{bowman1996graphical} as a function of $v$ (as the shapes of the curves need not follow any convenient parametric form, e.g., see Figure \ref{fig:introb}). One simple estimation strategy is to smooth both curves with LOESS and compute the maximum difference between the two estimated curves. Using this approach, worst-case calibration errors $\text{VCE}(v^*)$ for the Adult Income model are also shown in Table \ref{table:adultrank}.

\subsection{Calibration Methods}
\label{sec:recal_methods}
We found empirically, across multiple datasets, that standard score-based calibration methods often reduce ECE while neglecting variable-based systematic miscalibration. Because calibration error can vary as a function of a feature of interest $V$, we propose incorporating information about $V$ into post-hoc calibration. In particular, we introduce the concept of \textit{variable-based calibration methods}, a family of calibration methods that adjust confidence scores with respect to some variable of interest $V$. As an illustrative example, we perform experiments in Section \ref{section:diffeqs} with a modification of probability calibration trees \citep{leatharttrees}. This technique involves performing logistic calibration separately for data splits defined by decision trees trained over the input space. We alter the method to train decision trees for $y$ with only $v$ as input, with a minimum leaf size of one-tenth of the total calibration set size. We then perform beta calibration at each leaf \citep{kull2017}, as we found in our experiments that it performs empirically better than logistic calibration. In the multi-class case, we use Dirichlet calibration, an extension of beta calibration for $k$-class classification \citep{kull2019}. Our split-based calibration method using decision trees is intended to provide a straightforward illustration of the potential benefits of variable-based calibration, rather than a state-of-the-art methodology that can balance ECE and VECE (which we leave to future work). We also investigated variable-based calibration methods that operate continuously over $V$ (rather than on separate data splits) using extensions of logistic and beta calibration, but found that these were not as reliable in our experiments as the tree-based approach (see Appendix C for details).

\section{Variable-Based Miscalibration in Practice}
\label{section:diffeqs}
In this section, we explore several examples where the ECE obscures systematic miscalibration relative to some variable of interest, particularly after post-hoc score-based calibration. In our experiments we use four datasets that span tabular, text, and image data.
For each dataset and variable of interest $V$, we investigate both (1) several score-based calibration methods and (2) our variable-based calibration method (the tree-based technique described in Section \ref{sec:recal_methods}), comparing the resulting ECE, VECE, and variable-based calibration plots. In particular, we calibrate with scaling-binning \citep{kumar2019verified}, Platt scaling \citep{plattprobabilistic}, beta calibration \citep{kull2017}, and, for the multi-class case, Dirichlet calibration \citep{kull2019}.
The datasets are split into training, calibration, and test sets. Each calibration method is trained on the same calibration set, and all metrics and figures are produced from the final test set. The ECE and VECE are computed with an equal-support binning scheme, with $B=10$. Further details regarding datasets, models, and calibration methods are in Appendix B.


\begin{figure*}[!ht]
    \centering
    \includegraphics[width=0.83\linewidth]{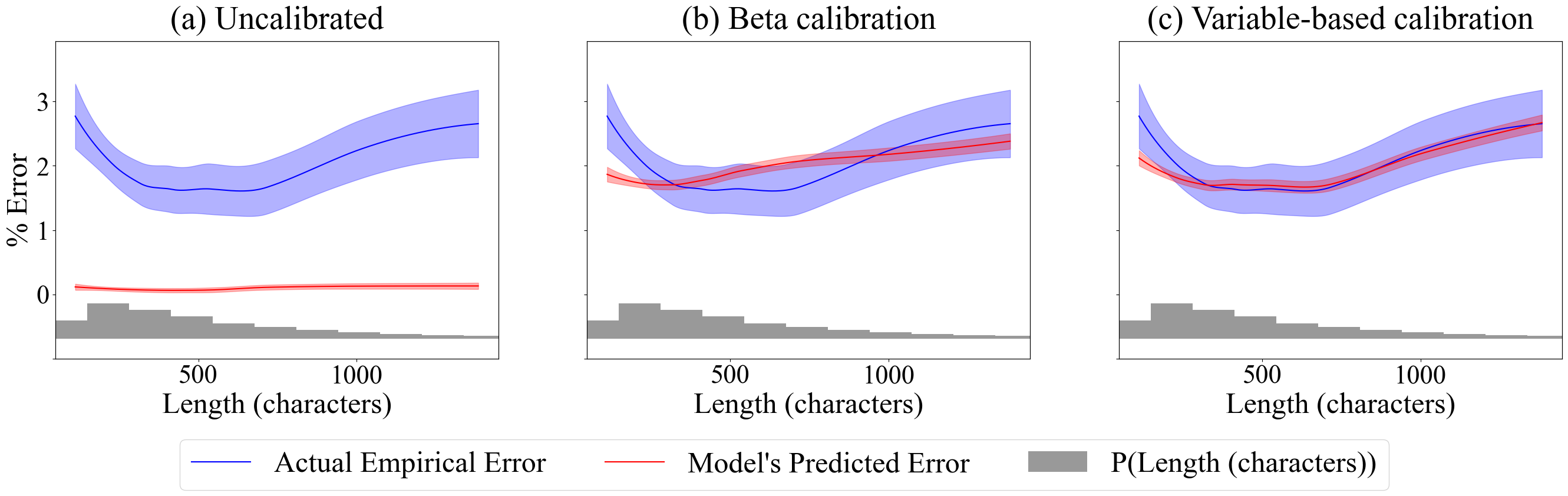} 
    \caption{Variable-based calibration plots for the Yelp model for \textit{Review Length}} \label{fig:yelp_co}
\end{figure*}

\subsection{Adult Census Records: Predicting Income}
\label{section:adultexp}
The Adult Income dataset
consists of 1994 Census records; the goal is to predict whether an individual's annual income is greater than \$50,000. We model this data with a simple feed-forward neural network and evaluate the model's calibration error with respect to age (i.e. let $V$=age). Uncalibrated, this model has an ECE and VECE of 20.67\% (see Table \ref{table:adult}). The ECE and VECE are equal precisely because of the model's consistent overconfidence as a function of both the confidence score and $V$ (see Definition \ref{sec:overconf}). This overconfidence with respect to age is reflected in the variable-based calibration plot (Figure \ref{fig:adult_co}a). The model's error rate varies significantly as a function of age, with very high error for individuals around age 50, and much lower error for younger and older people. However, its confidence remains nearly constant at close to 100\% (i.e., a predicted error close to 0\%) across all ages. 

\begin{table}[!ht]
\centering
\begin{tabular}{lll}
           & \textbf{ECE} & \textbf{VECE}  \\
 \hline\hline
Uncalibrated & 20.67\%  & 20.67\%       \\
\hline
Scaling-binning & 2.27\%  & 9.25\% \\
Platt scaling & 4.57\%  & 10.13\%  \\
Beta calibration & 1.65\%  & 9.59\%  \\
\hline
Variable-based calibration & \textbf{1.64\%}  & \textbf{2.11\%}  \\
\end{tabular}
\caption{Adult Income model calibration error}
\label{table:adult}
\end{table}


\begin{figure*}[!ht]
    \centering
    \includegraphics[width=0.83\linewidth]{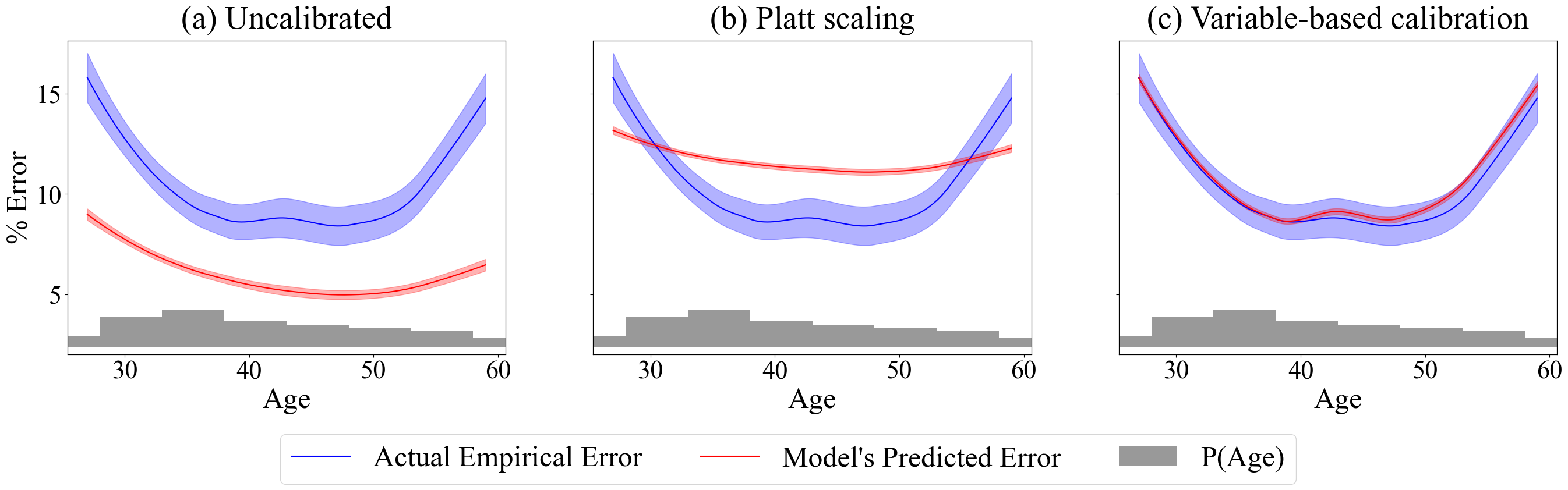} 
    \caption{Variable-based calibration plots for the Bank Marketing model for \textit{Age}} \label{fig:bank_co}
\end{figure*}

After calibrating, the ECE is dramatically reduced, with beta calibration achieving an ECE of 1.65\%. However, the corresponding VECE is still very high (over 9\%). As shown in Figure \ref{fig:adult_co}b, the model's self-predicted error has increased substantially, but remains near constant as a function of age. Thus, despite a significant improvement in ECE, the model still harbors unfairness with respect to age, exhibiting overconfidence in its predictions for individuals in the 35-65 age range, and underconfidence for those outside of it. As the model is no longer consistently overconfident, the ECE and VECE diverge, as predicted theoretically.

Variable-based calibration obtains a significantly lower VECE of 2.11\%, while simultaneously reducing the ECE. This improvement in VECE is reflected in Figure \ref{fig:adult_co}c. The model's predicted error now varies with age to match the true error rate. In this case, a simple variable-based calibration method improves the age-wise systematic miscalibration of the model, without detriment to the overall calibration error.

\subsection{Yelp Reviews: Predicting Sentiment}
To explore variable-based calibration in an NLP context, we use a fine-tuned large language model, BERT \citep{devlin2018}, on the Yelp review dataset\footnote{https://www.yelp.com/dataset}.
The model predicts whether a review has a positive or negative rating based on its text. In this case there are no easily-interpretable features directly input to the model. Instead, to better diagnose model behavior, we can analyze real-valued characteristics of the text, such as the length of each review or part-of-speech statistics. Here we focus on review length in characters. 

Figure \ref{fig:yelp_co} shows the model's error and predicted error with respect to review length. The error rate is lowest for reviews around 300-700 characters, around the median review length. Very short and very long reviews are associated with a higher error rate. Uncalibrated, this model is consistently overconfident, with an ECE and VECE of 1.93\% (see Table \ref{table:yelp}).

\begin{table}[!ht]
\centering
\begin{tabular}{lll}
           & \textbf{ECE} & \textbf{VECE}  \\
 \hline\hline
Uncalibrated & 1.93\%  & 1.93\%       \\
\hline
Scaling-binning & 4.23\%  & 4.23\% \\
Platt scaling & 3.04\%  & 0.64\%  \\
Beta calibration & 1.73\%  & 0.37\%  \\
\hline
Variable-based calibration & \textbf{1.70\%}  & \textbf{0.23\%}  \\
\end{tabular}
\caption{Yelp model calibration error}
\label{table:yelp}
\end{table}

After beta calibration, the ECE and VECE drop to 1.73\% and 0.37\%, respectively. Figure \ref{fig:yelp_co}b reflects this: the model's predicted error aligns more closely with its actual error rate, although it is still overconfident for very short reviews.

Our variable-based calibration method further reduces the VECE and yields a small improvement to the ECE. The new predicted error curve matches the true relationship between review length and error rate more faithfully (Figure \ref{fig:yelp_co}c), reducing overconfidence for short reviews.


\begin{figure*}[!ht]
    \centering
    \includegraphics[width=0.83\linewidth]{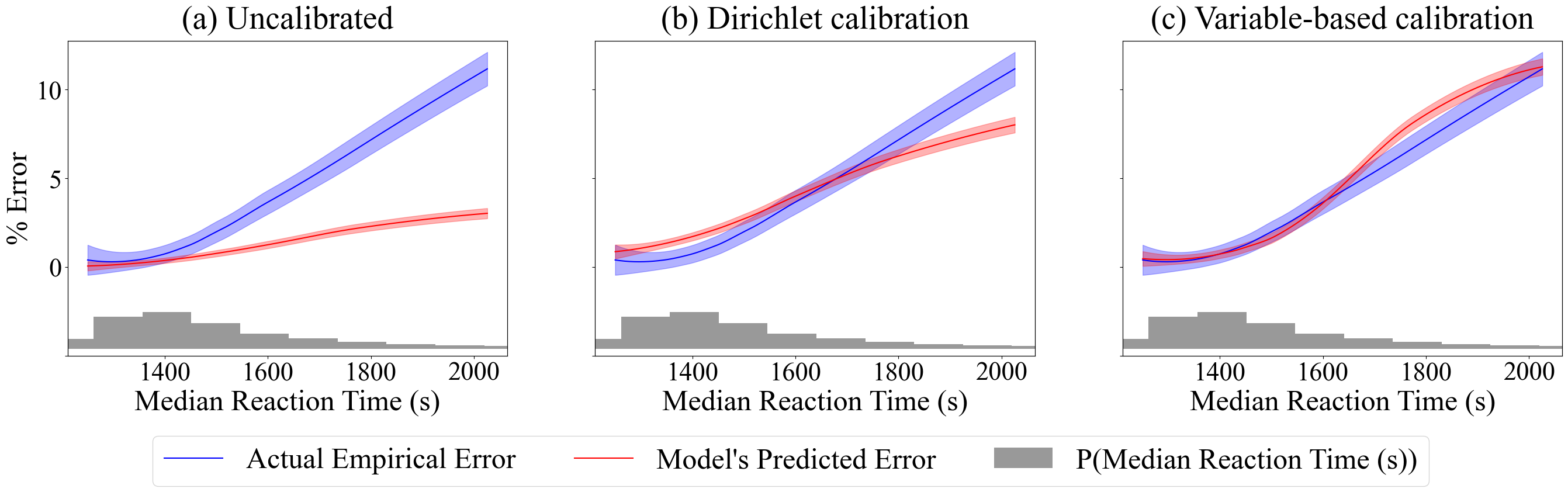} 
    \caption{Variable-based calibration plots for the CIFAR-10H model for \textit{Median Reaction Time}} \label{fig:cifar_co}
\end{figure*}

\subsection{Bank Marketing: Predicting
Subscriptions}
We also investigate miscalibration on a simple neural network modeling the Bank Marketing dataset\footnote{https://archive.ics.uci.edu/ml/datasets/bank+marketing}.
The model predicts whether a bank customer will subscribe to a bank term deposit as a result of direct marketing. Uncalibrated, the model is overconfident, with both ECE and VECE over 4.5\% (see Table \ref{table:bank}). Consider the calibration error with respect to customer age before (Figure \ref{fig:bank_co}a) and after (Figure \ref{fig:bank_co}b) Platt scaling. Platt scaling, which is the best-performing score-based calibration method, uniformly increases the predicted error across age, reducing both ECE and VECE, but resulting in underconfidence for most ages and overconfidence at the edges of the distribution.

\begin{table}[!ht]
\centering
\begin{tabular}{lll}
           & \textbf{ECE} & \textbf{VECE}  \\
\hline\hline
Uncalibrated & 4.69\%  & 4.69\%       \\ 
\hline
Scaling-binning & 4.37\%  & 3.39\% \\
Platt scaling & 2.38\%  & 2.83\%  \\
Beta calibration & 2.48\%  & 2.77\%  \\
\hline
Variable-based calibration & \textbf{2.10}\%  & \textbf{0.52\%}  \\
\end{tabular}
\caption{Bank Marketing model calibration error}
\label{table:bank}
\end{table}

The variable-based calibration method achieves competitive ECE, while reducing VECE to about half of one percent. The calibration plot reflects this improvement: the predicted error matches the true error rate more closely, reducing the miscalibration with respect to customer age.

\subsection{CIFAR-10H: Image Classification}

As a multi-class example, we investigate variable-based miscalibration on CIFAR-10H, a 10-class image dataset including labels and reaction times from human annotators \citep{peterson2019human}. We use a standard deep learning image classification architecture (a DenseNet model) to predict the image category, and investigate median annotator reaction times, metadata that are not provided to the model. Instead of Platt scaling and beta calibration, here we use Dirichlet calibration (to accomodate the multiple classes). 

In this case, Dirichlet calibration achieves the lowest overall ECE and variable-based calibration obtains the lowest VECE (see Table \ref{table:cifar}). The variable-based calibration plots are shown in Figure \ref{fig:cifar_co}. We see that the variable-based calibration method reduces underconfidence for examples with low median reaction times (where the majority of data points lie). 

\begin{table}[!ht]
\centering
\begin{tabular}{lll}
           & \textbf{ECE} & \textbf{VECE}  \\
\hline\hline
Uncalibrated & 1.90\%  & 1.92\%       \\ 
\hline
Scaling-binning & 3.83\%  & 3.60\% \\
Dirichlet calibration & \textbf{0.80\%}  & 1.12\%  \\
\hline
Variable-based calibration & 1.18\%  & \textbf{0.86\%}  \\
\end{tabular}
\caption{CIFAR-10H model calibration error}
\label{table:cifar}
\end{table}

\paragraph{Summary of Experimental Results}
Our results demonstrate the potential of variable-based calibration. While score-based calibration methods generally improved the ECE, variable-based calibration methods performed better across datasets in terms of simultaneously reducing both the ECE and VECE, without any significant increase in model error rate or the VECE for other variables (details in Appendix B). The results also illustrate that variable-based calibration plots enable meaningful characterization of the relationships between variables of interest and predicted/true error, providing more detailed insight into a model's performance than a single number (i.e., ECE or VECE). 

\section{Discussion and Conclusions}
\label{section:conclusion}
\paragraph{Discussion of Limitations} There are several potential limitations of this work. First, we focused on the mitigation of miscalibration for one variable $V$ at a time. 
Although we did not observe higher VECE for other variables after applying our variable-based calibration method, this behavior has not been analyzed theoretically. Further, a more thorough investigation of miscalibration across intersections of variables is still needed. We also emphasize that the variable-based calibration method used in the paper is primarily for illustration; the development of new methods for simultaneously reducing score-based and variable-based miscalibration is a useful direction for future work.

\paragraph{Conclusions} In this paper we demonstrated theoretically and empirically that ECE can obscure significant miscalibration with respect to variables of potential importance to a developer or user of a classification model. To better detect and characterize this type of miscalibration, we introduced the VECE measure and corresponding variable-based calibration plots, and we characterized the theoretical relationship between VECE and ECE. In a case study across several datasets and models, we showed that VECE, variable-based calibration plots, and variable-based calibration methods are all useful tools for understanding and mitigating miscalibration on a per-variable level. Looking forward, to mitigate biases in calibration error, we recommend moving beyond purely score-based calibration analysis. In addition to promoting fairness, these techniques offer new insight into model behavior and provide actionable avenues for improvement.

\section*{Acknowledgements}
This material is based upon work supported in part by the HPI Research Center in Machine Learning and Data Science at UC Irvine, by the National Science Foundation under grants number 1900644 and 1927245, and by a Qualcomm Faculty Award.

\bibliography{refs}

\onecolumn
\appendix
\onecolumn

\section{Proofs for Section 5}
\label{section:proofs}

\begin{theorem}[VECE bound]
There exist $K$-ary classifiers $f$ and variables $V$ such that the classifier $f$ has ECE = 0 and
$ \text{VECE} = 0.5 - \frac{1}{2K}$.
\end{theorem}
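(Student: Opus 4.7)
The plan is to prove the theorem by explicit construction, exploiting the fact that ECE averages accuracy conditional on the score $s$, so extreme over- and under-confidence in different regions of $V$ can cancel out when conditioning on a shared score value. First I would let $V$ be a binary random variable uniform on $\{v_0, v_1\}$ and pick a single score value $s^\star = \tfrac{1}{2} + \tfrac{1}{2K} = \tfrac{K+1}{2K}$. I would then take $f$ to be a classifier that, on every input, outputs $s^\star$ on one class and $(1-s^\star)/(K-1)$ on each of the other $K-1$ classes. Because $s^\star \ge 1/K$, this is a valid probability vector with $\max f(x) = s^\star$, so the confidence score is identically $s^\star$.

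Next I would specify the joint distribution over $(V,Y)$ so that all the miscalibration is variable-based: in region $v_0$ the prediction is always correct, giving $\mathrm{Acc}(v_0) = 1$ (extreme underconfidence, since the score is only $s^\star<1$), while in region $v_1$ the prediction is correct with probability exactly $1/K$, giving $\mathrm{Acc}(v_1) = 1/K$ (extreme overconfidence). Such a joint distribution is easy to realize; for instance one can define the label $Y$ as a deterministic function of $V$ and an independent auxiliary randomizer.

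The verification is then two short computations. For ECE: since $s = s^\star$ almost surely, the only score with positive mass is $s^\star$, and the marginal accuracy at that score is
\begin{equation*}
P(y=\hat y\mid s=s^\star) \;=\; \tfrac{1}{2}\cdot 1 + \tfrac{1}{2}\cdot\tfrac{1}{K} \;=\; \tfrac{K+1}{2K} \;=\; s^\star,
\end{equation*}
so $\mathrm{ECE}=0$. For VECE: because the score is constant, $\mathbb{E}[s\mid v_i] = s^\star$ for $i\in\{0,1\}$, so $\mathrm{VCE}(v_0)=|1-s^\star|=(K-1)/(2K)$ and $\mathrm{VCE}(v_1)=|1/K-s^\star|=(K-1)/(2K)$, and averaging with weights $1/2$ gives $\mathrm{VECE} = (K-1)/(2K) = 0.5 - 1/(2K)$ as claimed.

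The only subtle point, and the main thing to get right, is the choice of $s^\star$: it must simultaneously (i) equal the arithmetic mean of the two per-region accuracies $1$ and $1/K$ (so that the ECE contribution cancels), and (ii) be at least $1/K$ so that a $K$-class probability vector with $\max = s^\star$ exists. Solving $s^\star = (1 + 1/K)/2$ delivers both requirements automatically and yields the exact value $(K-1)/(2K)$ for the VECE. Everything else is straightforward plugging in.
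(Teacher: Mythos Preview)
Your proof is correct and the underlying idea---letting over- and under-confidence in two regions of $V$ cancel at a common score value---is the same as the paper's. The execution, however, is different and in fact cleaner. The paper works with a continuous $V$ and a nondegenerate score distribution $s\sim U(\gamma-\alpha,\gamma+\alpha)$, specifies conditional accuracies on the four cells of $\{v\le v_t,\,v>v_t\}\times\{s\le\gamma,\,s>\gamma\}$, and computes $\text{ECE}=\alpha/4$ and $\text{VECE}=0.5-\tfrac{1}{2K}-\tfrac{\alpha}{2}$; the stated values are then obtained only in the limit $\alpha\to 0$. You collapse this to the degenerate case directly: a binary $V$, a single deterministic score $s^\star=\gamma$, and accuracies $1$ and $1/K$ in the two regions. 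This hits $\text{ECE}=0$ and $\text{VECE}=0.5-\tfrac{1}{2K}$ exactly rather than asymptotically, and avoids the four-cell bookkeeping. The paper's version has the minor advantage of exhibiting a family of classifiers with nondegenerate score distributions approaching the bound; yours has the advantage of actually attaining it and being shorter. Since the paper explicitly allows discrete $V$, your use of a two-point variable is legitimate.
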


\begin{proof}

Let $V$ be a continuous variable with density $P(v)$.
Recall that $\text{VECE} = \int_v P(v) | P(y = \hat{y}|v) - \mathbb{E}[s|v] |dv$ where $P(y = \hat{y}|v)$ is the accuracy of model $f$ as a function of $v$, and the score $s$ is the probability that the model assigns to its label prediction $\hat{y}$.

The reliability diagram for a $K$-ary classifier has scores $s \in [ \frac{1}{K}, 1]$ where the leftmost value for this interval is a result of the fact that the score is defined as the maximum of $K$ class probabilities. Let $\gamma = 0.5 + \frac{1}{2K}$ be the midpoint of this interval. 

Assume that the scores $s$ have a uniform distribution of the form $s \sim U(\gamma-\alpha, \gamma + \alpha)$, where $\alpha$ is some constant and $0 \le \alpha \le 0.25$, and that the scores $s$ and the variable $V$ are independent. 

Further assume that  the accuracy of the model $f$ depends on  $v$ and $s$ in the following manner
\begin{align*}
    & P(y=\hat{y} | v \le v_t, s \le \gamma) = 1 - \alpha && P(y=\hat{y} | v \le v_t, s> \gamma) = 1\\
    & P(y=\hat{y} | v > v_t, s \le \gamma) = \frac{1}{K} && P(y=\hat{y} | v > v_t, s>\gamma) =\frac{1}{K} + \alpha
\end{align*} 
where $v_t$ is defined such that $P(v\le v_t) = P(v > v_t) = 0.5$.

The marginal accuracy as a function of the score (marginalizing over $v$) can be written as 
\begin{align*}
    & P(y=\hat{y} |   s \le \gamma) = \gamma - \frac{\alpha}{2}   \\
    & P(y=\hat{y} |   s > \gamma) = \gamma + \frac{\alpha}{2}.  
\end{align*} 

The marginal accuracy as a function of $v$ (marginalizing over $s$) is
\begin{align*}
    & P(y=\hat{y} | v \le v_t   ) = 1 - \frac{\alpha}{2}   \\
    & P(y=\hat{y} |   v > v_t   ) = \frac{1}{K} + \frac{\alpha}{2}.  
\end{align*} 

This setup is designed so that the score is close to the accuracy as a function of $s$ (to minimize ECE), but the variable-based expected scores $\mathbb{E}[s|v]  = \gamma$ are relatively far away from accuracy as a function of $v$.

Under these assumptions we can write the ECE as
\begin{align}
\begin{split}
\text{ECE} &= \int_{s} p(s) \cdot |P(y=\hat{y}|s) - s| ds \\
&= \int_{\gamma-\alpha}^{\gamma} \frac{1}{2 \alpha} | \gamma - \frac{\alpha}{2} - s | ds +\int_{\gamma}^{\gamma+\alpha} \frac{1}{2 \alpha} | \gamma + \frac{\alpha}{2} - s | ds \\
&= \frac{\alpha}{4}.
\end{split}
\end{align}
 
We can write the VECE as
\begin{align}
\begin{split}
\text{VECE} &=   \int_{-\infty}^{v_t} P(v) \cdot |P(y=\hat{y}|v) - \mathbb{E}[s|v]| dv
 +  \int_{v_t}^{\infty} P(v) \cdot |P(y=\hat{y}|v) - \mathbb{E}[s|v]| dv \\
& = \int_{-\infty}^{v_t} P(v) \cdot | 1 - \frac{\alpha}{2}- \gamma | dv
 +  \int_{v_t}^{\infty} P(v) \cdot | \frac{1}{K} + \frac{\alpha}{2}  - \gamma|  dv \\
 & =  (0.5 - \frac{1}{2K} - \frac{\alpha}{2})  \int_v P(v) dv  \\
 & = 0.5 - \frac{1}{2K} - \frac{\alpha}{2}.
\end{split}
\end{align}

Thus, as $\alpha \to 0$, $\text{VECE} \to (0.5 - \frac{1}{2K})$ and $\text{ECE} \to 0$.

\end{proof}

\begin{theorem}[ECE bound]
There exist K-ary classifiers $f$ and variables $V$ such that the classifier $f$ has $\text{VECE}=0$ and $\text{ECE}=0.5 - \frac{1}{2K}$.
\end{theorem}

\begin{proof}
Let $V$ be a continuous variable with density $P(V)$. Recall that a K-ary classifier has scores $s \in [\frac{1}{K},1]$, where we let $\gamma = 0.5 + \frac{1}{2K}$ be the midpoint of this interval. Assume that $f$ produces scores from two uniform distributions, with equal probability: $s \sim U(\frac{1}{K}, \frac{1}{K} + \alpha)$ and $s \sim U(1-\alpha,1)$, where $\alpha$ is some constant $0 \le \alpha \le 0.25$, and that the scores $s$ and the variable $V$ are independent. Finally, suppose the accuracy of the model $P(y=\hat{y}) = \gamma$ is independent of $s$ and $V$.

Under these assumptions we can write the VECE as

\begin{align}
\begin{split}
\text{VECE} &=   \int_{-\infty}^{\infty} P(v) \cdot |P(y=\hat{y}|v) - \mathbb{E}[s|v]| dv \\
& = \int_{-\infty}^{\infty} P(v) \cdot |\gamma - \gamma| dv \\
 & = 0.
\end{split}
\end{align}

We can write the ECE as

\begin{align}
\begin{split}
\text{ECE} &= \int_{s} p(s) \cdot |P(y=\hat{y}|s) - s| ds \\
&= \frac{1}{2} \int_{\frac{1}{K}}^{\frac{1}{K}+\alpha} \frac{1}{\alpha} | \gamma - s | ds + \frac{1}{2} \int_{1-\alpha}^{1} \frac{1}{\alpha} | \gamma - s | ds \\
&=0.5 - \frac{1}{2K} - \frac{\alpha}{2}.
\end{split}
\end{align}

Thus, as $\alpha \to 0$, $\text{ECE} \to (0.5 - \frac{1}{2K})$ and $\text{VECE} = 0$.

\end{proof}

\begin{definition}[Consistent overconfidence]
Let $f$ be a classifier with scores $s$. For a variable $V$ taking values $v$,  $f$ is {\it consistently overconfident}
if $\mathbb{E}[s|v] > P(y = \hat{y} |v), \forall v$, i.e., the expected value of the model's scores $f$ as a function of $v$ is always greater than the true accuracy as a function of $v$.
\end{definition}
Consistent underconfidence is defined analogously with $\mathbb{E}[s|v] < P(y = \hat{y} |v), \forall v$.
In the special case where the variable $V$ is defined as the score itself, we have $ s > P(y = \hat{y} |s), \forall s$, etc.

\begin{theorem}[Equality conditions for ECE and VECE]
Let $f$ be a classifier that is consistently under- or over-confident with respect both to $s$ and to a variable $V$. Then the ECE and VECE of $f$ are equal.
\end{theorem}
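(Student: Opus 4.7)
The plan is to exploit consistent over- or under-confidence to drop the absolute values in both the ECE and the VECE integrals, at which point each expression collapses, via iterated expectation, to the same difference of unconditional quantities.

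First I would write out both definitions and peel off the absolute values. In the overconfidence case, by assumption $s > P(y=\hat{y}|s)$ for all $s$ and $\mathbb{E}[s \mid v] > P(y=\hat{y}|v)$ for all $v$, so
\begin{align*}
\text{ECE} &= \int_s P(s)\,\bigl(s - P(y=\hat{y}|s)\bigr)\,ds, \\
\text{VECE} &= \int_v P(v)\,\bigl(\mathbb{E}[s\mid v] - P(y=\hat{y}|v)\bigr)\,dv.
\end{align*}

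Next I would apply the law of total expectation and the law of total probability to each integral. For ECE, $\int_s P(s)\, s \, ds = \mathbb{E}[s]$ and $\int_s P(s)\, P(y=\hat{y}|s)\, ds = P(y=\hat{y})$, so $\text{ECE} = \mathbb{E}[s] - P(y=\hat{y})$. For VECE, $\int_v P(v)\,\mathbb{E}[s\mid v]\,dv = \mathbb{E}[s]$ and $\int_v P(v)\,P(y=\hat{y}|v)\,dv = P(y=\hat{y})$, so $\text{VECE} = \mathbb{E}[s] - P(y=\hat{y})$. This gives $\text{ECE} = \text{VECE}$. The consistent-underconfidence case is symmetric: dropping the absolute values flips the sign in each integral, yielding $P(y=\hat{y}) - \mathbb{E}[s]$ for both quantities.

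There is no real obstacle here; the argument is essentially two applications of the tower property once the sign of the integrand is controlled. The only point that warrants a careful statement is that the two ``consistency'' hypotheses (one with respect to $s$, one with respect to $V$) must agree in direction, since the theorem assumes the same sign on both sides; otherwise one would end up equating $\mathbb{E}[s]-P(y=\hat{y})$ with its negative, forcing both measures to equal zero rather than simply being equal. I would include a brief remark noting this, and noting that the argument applies verbatim to discrete or multivariate $V$ by replacing the integrals with the appropriate sums or multivariate integrals.
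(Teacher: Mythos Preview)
Your proposal is correct and matches the paper's own proof essentially line for line: both drop the absolute values using the consistency hypothesis, then reduce each integral via the law of total probability (and iterated expectation for $\mathbb{E}[s\mid v]$) to the common quantity $P(y=\hat{y}) - \mathbb{E}[s]$ (or its negative). Your added remarks about the two consistency directions needing to agree and about the extension to discrete or multivariate $V$ are reasonable side comments but not needed for the argument itself.
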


\begin{proof}
Without loss of generality, suppose $f$ is consistently underconfident with respect to its scores $s$ and $V$.

Then we have, by consistent underconfidence:
\begin{equation}
  \begin{split}
    \text{ECE} &= \int_{s} p(s) \cdot |P(y=\hat{y}|s) - s| ds\\
    &= \int_{s} p(s) \cdot P(y=\hat{y}|s) ds - \mathbb{E}[s]\\
  \end{split}
  \qquad
  \begin{split}
    \text{VECE} &= \int_{v} p(v) \cdot |P(y=\hat{y}|v) - \mathbb{E}[s|v]| dv\\
    &= \int_{v} p(v) \cdot P(y=\hat{y}|v) dv - \int_v p(v) \mathbb{E}[s|v] dv \cdot \\
     &= \int_{v} p(v) \cdot P(y=\hat{y}|v) dv - \mathbb{E}[s]\\
  \end{split}
\end{equation}

By the law of total probability,
\begin{equation}
  \begin{split}
    &= \int_{s} p(s) \cdot P(y=\hat{y}|s) ds - \mathbb{E}[s]\\
    &= P(y=\hat{y}) - \mathbb{E}[s]\\
  \end{split}
  \qquad\qquad
  \begin{split}
    &= \int_{v} p(v) \cdot P(y=\hat{y}|v) dv - \mathbb{E}[s]\\
    &= P(y=\hat{y}) - \mathbb{E}[s]\\
  \end{split}
\end{equation}
So $\text{ECE} = \text{VECE} = P(y=\hat{y}) - \mathbb{E}[s]$.

\end{proof}

\clearpage

\section{Calibration, Model, and Dataset Details}
\label{sec:appdxa}
Here, we include additional information and plots for each dataset and model discussed in Section 7. Code for reproducing all tables and plots is available online.\footnote{\raggedright{Our code is available online at https://github.com/markellekelly/variable-wise-calibration.}} 

On each dataset, we test several existing calibration methods: Platt scaling, scaling-binning, beta calibration, and (for the multi-class case) Dirichlet calibration. For scaling-binning, we calibrate over 10 bins, and for Dirichlet calibration, we use a lambda value of 1e-3, values chosen based on the respective authors' provided examples. Here and in Section 7, we present the uncalibrated and variable-based calibrated output, along with the best-performing score-based calibration method (for the Adult and Yelp datasets, beta calibration; for Bank Marketing, Platt scaling; for CIFAR, Dirichlet calibration).

Our variable-based calibration method is performed as follows. Given the calibration set, a decision tree classifier is trained to predict the outcome $y$ with input $V$ (the single variable of interest). We use a maximum depth of two and a minimum leaf size of $0.1 *$ the size of the calibration set. The calibration set is then split according to the leaf nodes of the trained decision tree, and separately the rest of the dataset is split according to the same rules. Standard beta calibration is then performed separately for each split, using the subset of the original calibration set as the new calibration set, and computing the new calibrated probabilities for the subset of the original dataset.

Variable-based calibration plots are created with LOESS, with quadratic local fit and an assumed symmetric distribution of the errors, with empirically-chosen smoothing factors between 0.8 and 0.9.

We note the VECE for each numeric variable in each dataset before and after the calibration method is applied. We find in general empirically that variable-based calibration with respect to one variable is not detrimental to the VECE of other variables. 

Finally, we observe that our variable-based calibration method does not tend to significantly degrade accuracy. Accuracies for each dataset before and after its application are shown in Table \ref{table:accuracies}.

\begin{table}[!ht]
\centering
\begin{tabular}{lllll}
           & \textbf{Adult Income} & \textbf{Yelp} & \textbf{Bank Marketing} & \textbf{CIFAR}  \\
\hline
Uncalibrated & 79.1\%  & 98.0\%   & 88.9\%   & 97.2\%  \\ 
Score-based calibrated & 79.1\%  & 98.0\%   & 88.7\% & 96.9\% \\ 
Variable-based calibrated & 79.1\% & 98.0\%  & 88.7\% &  96.0\%\\ 
\end{tabular}
\caption{Accuracies for all four datasets before calibration, after the highest-ECE score-based calibration (as reported in the main paper and below), and after variable-based calibration.}
\label{table:accuracies}
\end{table}

\subsection{Adult Income}
The Adult Income dataset was modeled with a multi-layer perceptron, with two hidden layers of sizes 100 and 75. Of the 48,842 observations, 32,561 were used for training, 2,500 were used for calibration, and 13,781 were used for testing. The dataset includes six continuous variables: age, fnlwgt (the estimated number of people an individual represents), education-num (a number representing the individual's years of education), capital-gain, capital-loss, and hours-per-week (the number of hours per week that an individual works). 

Based on the beta-calibrated model, education-num and age rank the highest in VECE, as shown in Section \ref{section:experiments}. For all six variables, VECE is reduced by applying the variable-based calibration method with respect to age: 

\begin{table}[!ht]
\centering
\begin{tabular}{llll}
         & \textbf{Uncalibrated}  & \textbf{Beta-calibrated} & \textbf{Variable-based calibrated}  \\
 \hline\hline
education-num & 20.67\% & 9.95\% &   \textbf{8.53\%}  \\
age & 20.67\% & 9.59\% & \textbf{2.11\%} \\
hours-per-week & 20.67\% & 7.94\%  &\textbf{6.02\%} \\
fnlwgt & 20.67\% & 5.06\% & \textbf{4.10\%}\\
capital-gain & 20.67\% & 1.50\% & \textbf{1.39\%}\\
capital-loss & 20.67\% & 1.50\% & \textbf{1.39\%}\\

\end{tabular}
\caption{VECE for numeric variables in the Adult Income dataset: uncalibrated, beta-calibrated, and after variable-based calibration \textbf{with respect to age}.}
\end{table}

Uncalibrated, the model's ECE and VECE are 20.67\%. Of the score-based calibration methods tested, beta calibration achieves the lowest ECE of 1.65\%. Relevant reliability diagrams for the uncalibrated, beta-calibrated, and variable-based calibrated models are shown in Figure \ref{fig:adult_all}.

\begin{figure*}[!ht]
    \centering
    \includegraphics[width=0.83\linewidth]{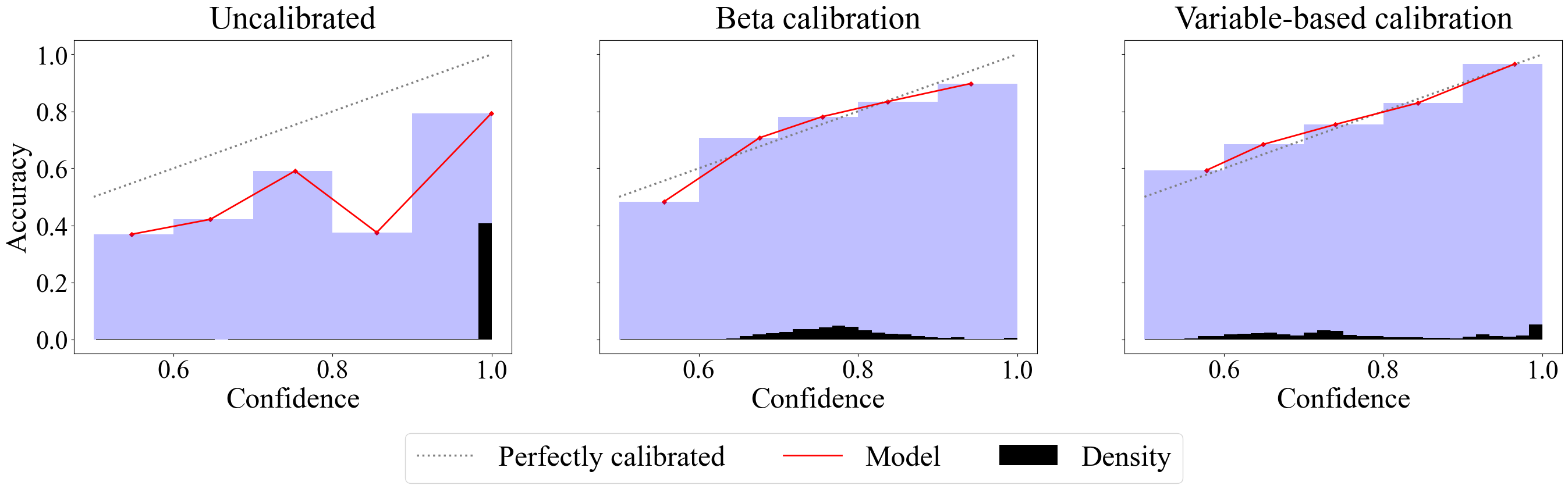} 
    \caption{Reliability diagrams for the Adult Income model} \label{fig:adult_all}
\end{figure*}

\subsection{Yelp}
The Yelp dataset was modeled with a fine-tuned BERT model. 100,000 observations were randomly sampled from the full Yelp dataset. Of these, 70,500 were used for training, 10,000 were used for calibration, and 19,500 were used for testing. Several continuous features were generated from the raw text reviews, including length in characters, number of special characters, and proportions of each part of speech. Based on the beta-calibrated model, review length ranked highest in VECE, followed by proportion of stop words, as shown in Table \ref{tab:yelpvars}.

\begin{table}[!ht]
\centering
\begin{tabular}{llll}
         & \textbf{Uncalibrated}  & \textbf{Beta-calibrated} & \textbf{Variable-based calibrated}  \\
 \hline\hline
Length (characters) & 1.93\% & 0.37\% & \textbf{0.23\%}      \\
Stop-word Proportion &  1.93\% & 0.29\% & \textbf{0.28\%}  \\
Named Entity Count & 1.93\% & \textbf{0.21\%} & 0.22\%  
\end{tabular}
\caption{VECE for numeric variables in the Yelp dataset: uncalibrated, beta calibrated, and after variable-based calibration \textbf{with respect to length in characters}.}
\label{tab:yelpvars}
\end{table}

Uncalibrated, the model's ECE and VECE are 1.93\%. Of the score-based calibration methods tested, beta calibration achieves the lowest ECE of 1.73\%. Relevant reliability diagrams for the uncalibrated, beta-calibrated, and variable-based calibrated models are shown in Figure \ref{fig:yelp_all}.

\begin{figure*}[!ht]
    \centering
    \includegraphics[width=0.83\linewidth]{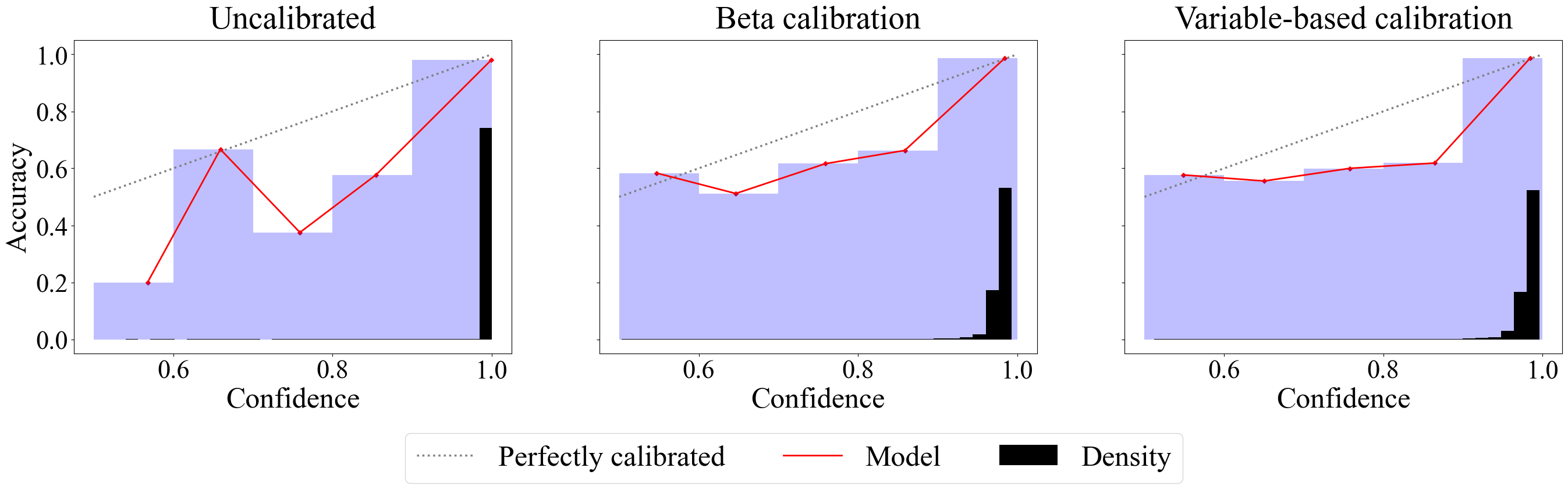} 
    \caption{Reliability diagrams for the Yelp model} \label{fig:yelp_all}
\end{figure*}

\subsection{Bank Marketing}
The Bank Marketing dataset was modeled with a multi-layer perceptron, with two hidden layers of sizes 100 and 75. Of the 45,211 total observations, 31,647 were used for training, 1,000 were used for calibration, and 12,564 were used for testing. Based on the model calibrated with Platt scaling, account balance ranked highest in VECE, followed by age, as shown in Table \ref{tab:bankvars}.

\begin{table}[!ht]
\centering
\begin{tabular}{llll}
         & \textbf{Uncalibrated}  & \textbf{Platt scaling} & \textbf{Variable-based calibrated}  \\
 \hline\hline
Account balance & 5.35\% & 4.17\% &  \textbf{3.22\%}   \\
Age & 4.69\% & 2.83\%  &   \textbf{0.52\%} \\
\end{tabular}
\caption{VECE for numeric variables in the Bank Marketing dataset: uncalibrated, calibrated with Platt scaling, and after variable-based calibration \textbf{with respect to age}.}
\label{tab:bankvars}
\end{table}

Uncalibrated, the model's ECE is 4.69\%. Of the score-based calibration methods tested, Platt scaling achieves the lowest ECE of 2.38\%. Relevant reliability diagrams for the uncalibrated, calibrated with Platt scaling, and variable-based calibrated models are shown in Figure \ref{fig:bank_all}.

\begin{figure*}[!ht]
    \centering
    \includegraphics[width=0.83\linewidth]{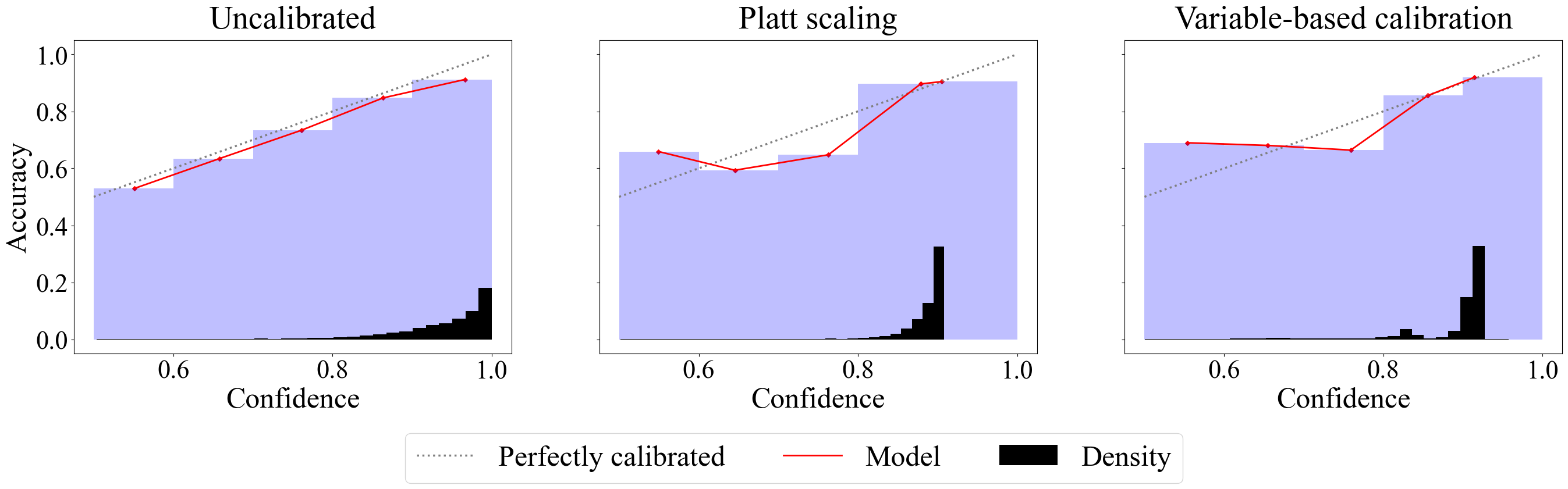} 
    \caption{Reliability diagrams for the Bank Marketing model, uncalibrated (left), calibrated with Platt scaling (middle), and variable-based calibrated (right)} \label{fig:bank_all}
\end{figure*}

\subsection{CIFAR-10H}
The CIFAR-10H dataset was modeled with a DenseNet model. Of the 10,000 total observations, 4,057 were used for training, 2,000 were used for calibration, and 3,943 were used for testing.

Uncalibrated, the model's ECE and VECE are 1.90\% and 1.92\%, respectively. Of the score-based calibration methods tested, Dirichlet calibration achieves the lowest ECE of 0.80\%. Relevant reliability diagrams for the uncalibrated, Dirichlet-calibrated, and variable-based calibrated models are shown in Figure \ref{fig:cifar_all}.

\begin{figure*}[!ht]
    \centering
    \includegraphics[width=0.83\linewidth]{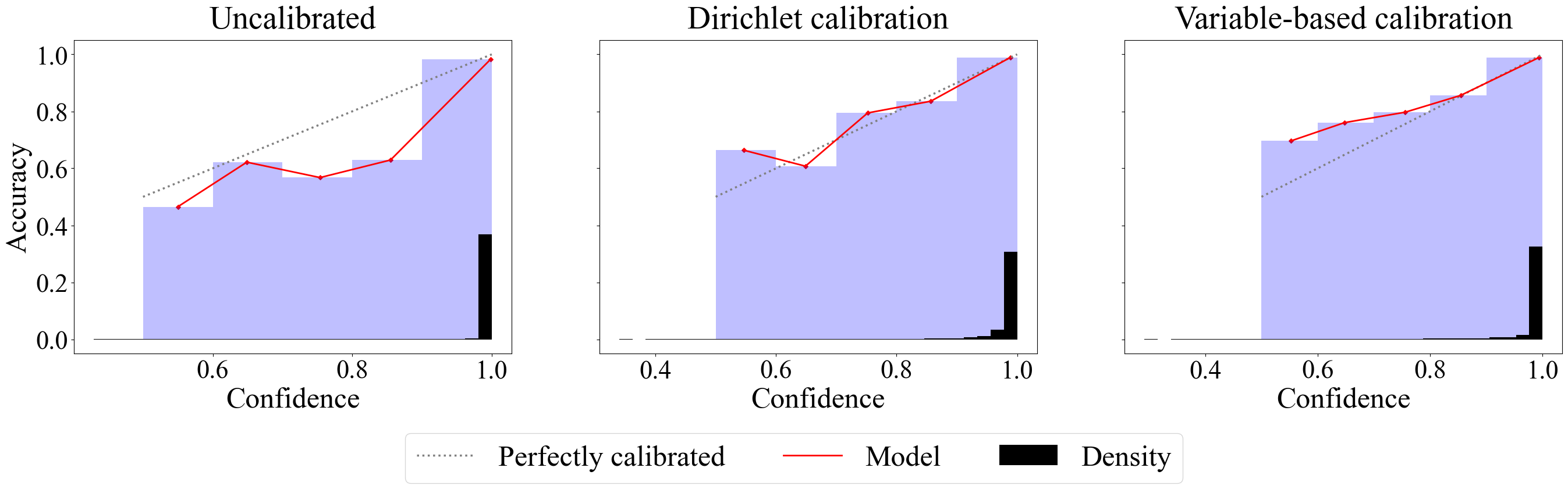} 
    \caption{Reliability diagrams for the CIFAR-10H model, uncalibrated (left), Dirichlet calibrated (middle), and variable-based calibrated (right)} \label{fig:cifar_all}
\end{figure*}

\clearpage
\section{Alternate Calibration Methods}
\label{section:extbetamethod}
As an alternative variable-based calibration method, we extend logistic and beta calibration, which operate continously over score, to incorporate information regarding $V$. In particular, logistic calibration learns a mapping $\mu$ of scores $s$, with parameters $a$ and $c$ learned via logistic regression:

\begin{align*}
\mu_{\text{logistic}}(s) = \frac{1}{1 + 1/\text{exp}(a \cdot s + c)}
\end{align*} 

This can be augmented to include $V$ by simply training the logistic regression on both $s$ and $V$, learning the following mapping:

\begin{align*}
\mu_{\text{logistic}\_v}(s, v) = \frac{1}{1 + 1/\text{exp}(a \cdot s + b \cdot v + c)}
\end{align*}

where $b$ is the logistic regression coefficient corresponding to $V$.

Similarly, beta calibration learns the following mapping, where the parameters $a$, $b$, and $c$ are learned by training a logistic regression on $\text{ln}(s)$ and $-\text{ln}(1-s)$ (see \cite{kull2017} for more details):

\begin{align*}
\mu_{\text{beta}}(s) = \frac{1}{1 + 1/\left( e^c \frac{s^a}{(1-s)^b} \right)}
\end{align*}

This can also be augmented with $V$, including it as a third input to the regression:

\begin{align*}
\mu_{\text{beta}\_v}(s, v) = \frac{1}{1 + 1/\left( e^{d \cdot v + c} \frac{s^a}{(1-s)^b} \right)}
\end{align*}

In contrast to the tree-based method detailed in the main paper, which splits the data along $V$ and then separately calibrates each set, these methods learn one calibration mapping for the entire dataset. Empirically, we find that augmented beta calibration is a promising approach, simultaneously reducing ECE and VECE, although some attention must be paid to the fit of the logistic regression (e.g., by including a quadratic term). However, in our experiments, this technique ultimately was not as reliable as tree-based calibration (perhaps because the functional form of beta calibration is not flexible enough to always be able to correct systematic miscalibration as a function of $V$).

Here, we include the results of the augmented-beta variable-based (VB) calibration method on the Adult Income, Yelp, and Bank Marketing datasets. The models for the Adult and Bank Marketing datasets include a quadratic term for $V$, which obtained a better fit. (Note that this formulation only applies to binary classification, so we do not include results for the CIFAR dataset here).

\begin{table}[!ht]
\centering
\begin{tabular}{lll}
           & \textbf{ECE} & \textbf{VECE}  \\
 \hline
Uncalibrated & 20.67\%  & 20.67\%       \\
Beta calibration & 1.65\%  & 9.59\%  \\
Tree-based VB calibration & 1.64\%  & 2.11\%  \\
Augmented-beta VB calibration & \textbf{1.49\%} & \textbf{1.87\%}\\
\end{tabular}
\caption{Adult Income model calibration error}
\label{table:adult2}
\end{table}

\begin{figure*}[!ht]
    \centering
    \includegraphics[width=0.83\linewidth]{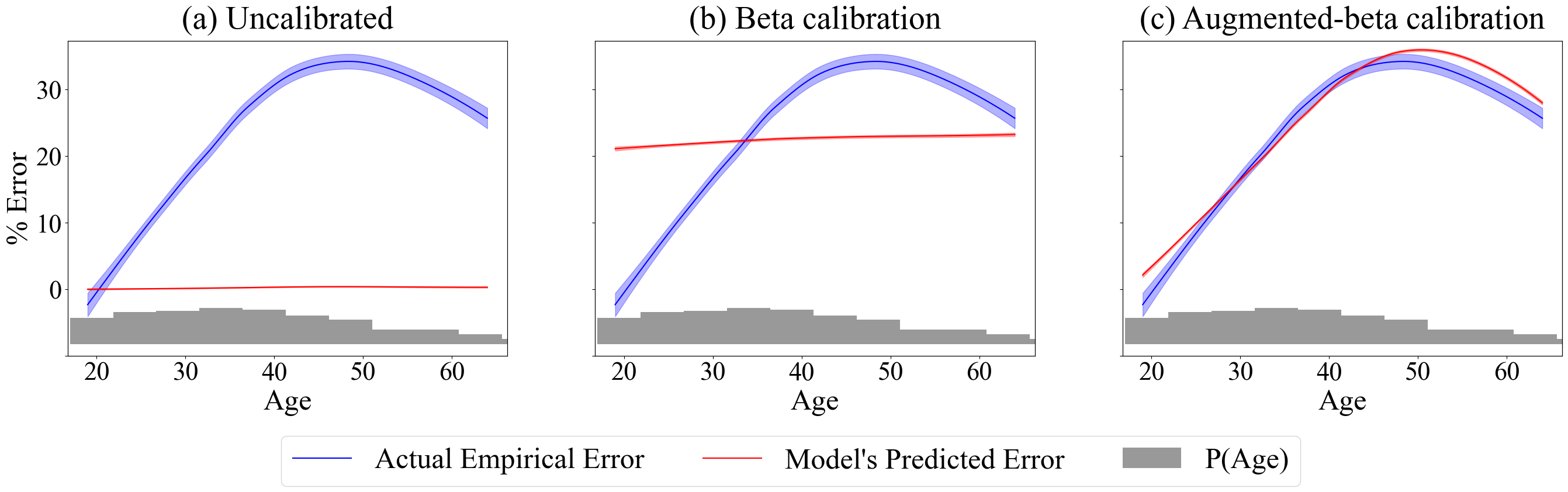} 
    \caption{Variable-based calibration plots for the Adult Income model for \textit{Age}} \label{fig:adult2}
\end{figure*}

\begin{table}[!ht]
\centering
\begin{tabular}{lll}
           & \textbf{ECE} & \textbf{VECE}  \\
 \hline
Uncalibrated & 1.93\%  & 1.93\%       \\
Beta calibration & 1.73\%  & 0.37\%  \\
Tree-based VB calibration & \textbf{1.70\%}  & \textbf{0.23\%}  \\
Augmented-beta VB calibration & 1.73\%  & 0.37\%  \\
\end{tabular}
\caption{Yelp model calibration error}
\label{table:yelp2}
\end{table}

\begin{figure*}[!ht]
    \centering
    \includegraphics[width=0.83\linewidth]{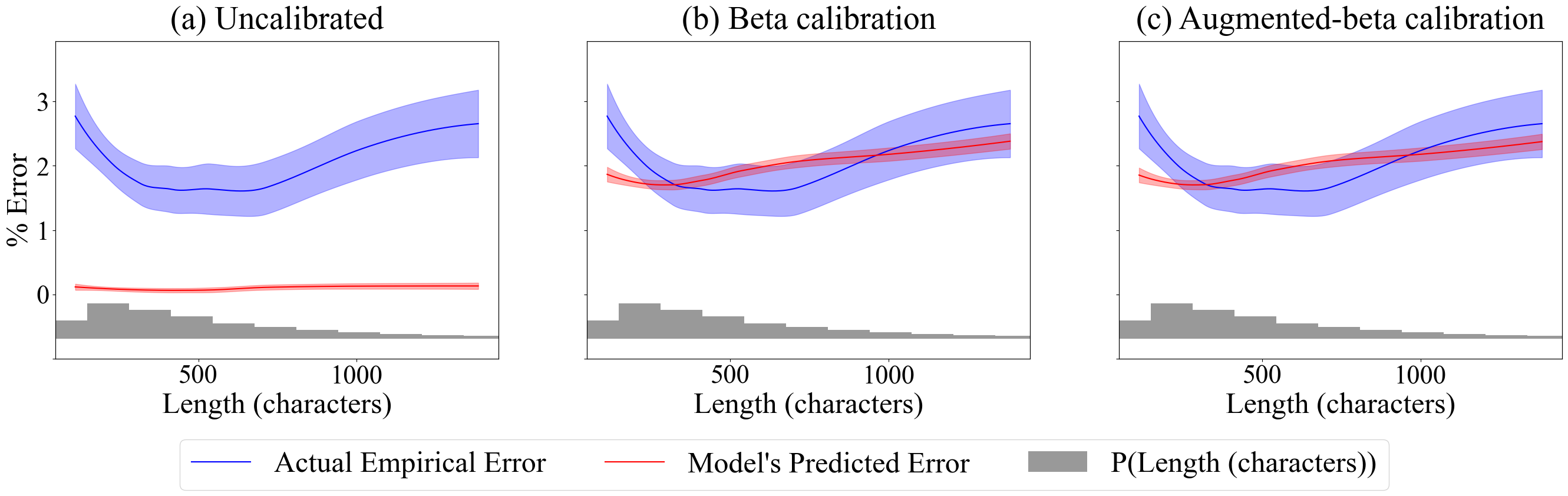} 
    \caption{Variable-based calibration plots for the Yelp model for \textit{Review Length}} \label{fig:yelp2}
\end{figure*}

\begin{table}[!ht]
\centering
\begin{tabular}{lll}
           & \textbf{ECE} & \textbf{VECE}  \\
\hline
Uncalibrated & 4.69\%  & 4.69\%       \\ 
Platt scaling & 2.38\%  & 2.83\%  \\
Tree-based VB calibration & 2.10\%  & \textbf{0.52\%}  \\
Augmented-beta VB calibration & \textbf{2.09}\% & 1.13\%\\
\end{tabular}
\caption{Bank Marketing model calibration error}
\label{table:bank2}
\end{table}

\begin{figure*}[!ht]
    \centering
    \includegraphics[width=0.83\linewidth]{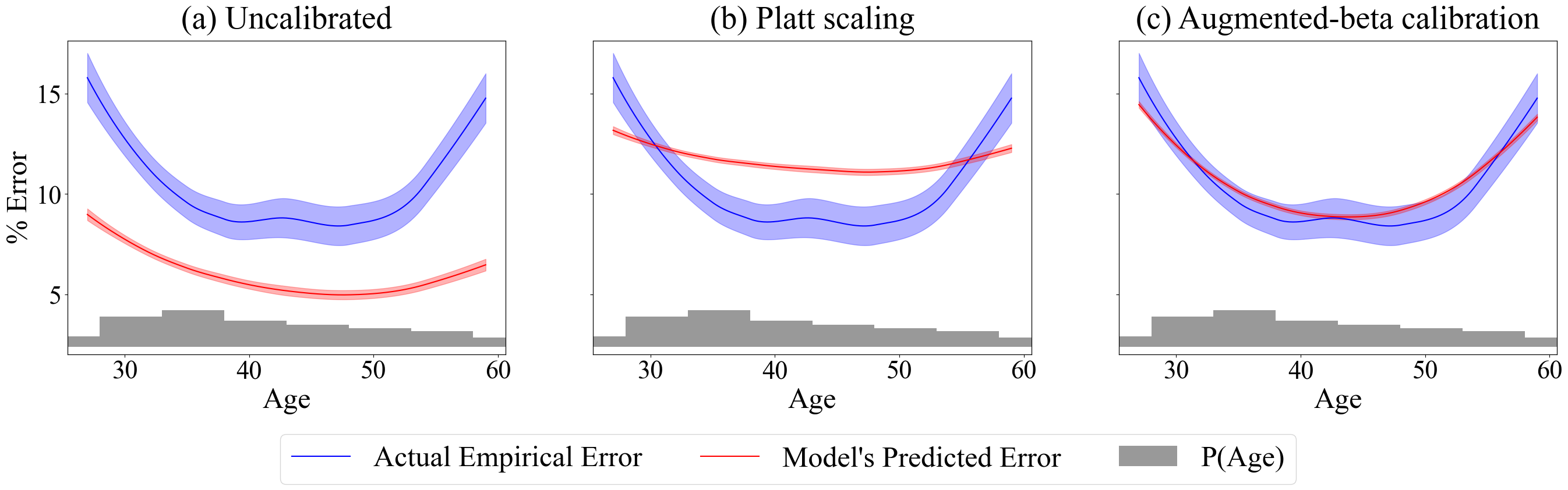} 
    \caption{Variable-based calibration plots for the Bank Marketing model for \textit{Age}} \label{fig:bank2}
\end{figure*}

\end{document}